\documentclass{article}
\usepackage{iclr2026_conference,times}

\usepackage{hyperref}
\usepackage{url}
\usepackage{booktabs}
\usepackage{amsfonts}
\usepackage{amsmath}
\usepackage{amssymb}
\usepackage{amsthm}
\newtheorem{proposition}{Proposition}
\usepackage{bbm}  
\DeclareMathOperator*{\argmax}{arg\,max}
\usepackage{nicefrac}
\usepackage{microtype}
\usepackage{graphicx}
\usepackage{xcolor}
\usepackage{tikz}
\usetikzlibrary{positioning, arrows.meta, shapes.geometric, fit}
\usepackage{algorithm}
\usepackage{algorithmic}
\usepackage{enumitem}
\usepackage[most]{tcolorbox}

\title{Towards Expert Financial QA via Self-Improving RAG}

\author{
  Junjie Xiong \\
  University of California, Berkeley \\
  \And
  Shawheen Ghezavat \\
  California Polytechnic State University \\
  \And
  Aum Hirpara \\
  Hofstra University
}

\iclrfinalcopy

\begin{document}
\raggedbottom  

\maketitle

\begin{abstract}
Expert-level financial question answering requires both \textbf{grounded verification} to catch numeric hallucinations and \textbf{audit trails} for regulatory compliance, attributes that standard single-pass RAG systems lack. We take a step toward this goal with Self-Improving RAG, a framework that decomposes document QA into three specialized agents (Retrieval, Reasoning, and Judge) coordinated by an orchestrator with feedback-driven self-correction. When the Judge Agent scores an answer below a dynamic threshold, the system triggers retry with escalated strategies: broader retrieval, more careful prompting, and relaxed acceptance criteria. We evaluate on FinanceBench (SEC filing QA), where Self-Improving RAG achieves 86\% oracle-guided accuracy (measuring agreement with gold answers) with a 36.4\% Lazarus Rate, recovering nearly 4 in 10 initially incorrect answers through targeted retry. A key finding is that a fixed retrieval pipeline with judge-driven retry achieves strong results without dynamic routing, providing full interpretability. Every decision is logged with confidence scores, enabling the audit trails required for regulated financial applications.
\end{abstract}


\section{Introduction}
\label{sec:intro}

Retrieval-augmented generation (RAG) has become the standard paradigm for knowledge-intensive question answering~\citep{lewis2021retrievalaugmentedgenerationknowledgeintensivenlp}. However, as RAG systems are deployed in high-stakes domains such as finance~\citep{islam2023financebenchnewbenchmarkfinancial, tai2025veritasfiadaptablemultitieredrag}, a critical limitation emerges: conventional single-pass pipelines lack the ability to recognize and correct their own failures~\citep{asai2023selfraglearningretrievegenerate, yan2024correctiveretrievalaugmentedgeneration}.

Consider a financial analyst querying SEC filings to extract quarterly revenue figures. A single-pass RAG system retrieves documents, generates an answer, and returns with no mechanism to verify correctness. For financial professionals, an incorrect answer is worse than no answer, and a system without an \textbf{audit trail} is indistinguishable from a guess.

\paragraph{The Walled Garden Constraint.}
Financial applications impose a crucial constraint that distinguishes them from general-domain QA: retrieval must remain within authorized document corpora. Unlike systems that can fall back to web search when initial retrieval fails~\citep{yan2024correctiveretrievalaugmentedgeneration}, financial QA systems operate in a ``walled garden'' where data governance policies prohibit external information sources. This constraint eliminates a common recovery mechanism and demands alternative approaches to self-correction. Importantly, this closed-domain constraint also serves as an \emph{agent governance mechanism}: by limiting retrieval scope, we ensure every answer traces to authorized, auditable sources, a key requirement for responsible deployment of agentic systems in regulated industries.

\begin{figure}[t!]
\centering
\includegraphics[width=\linewidth]{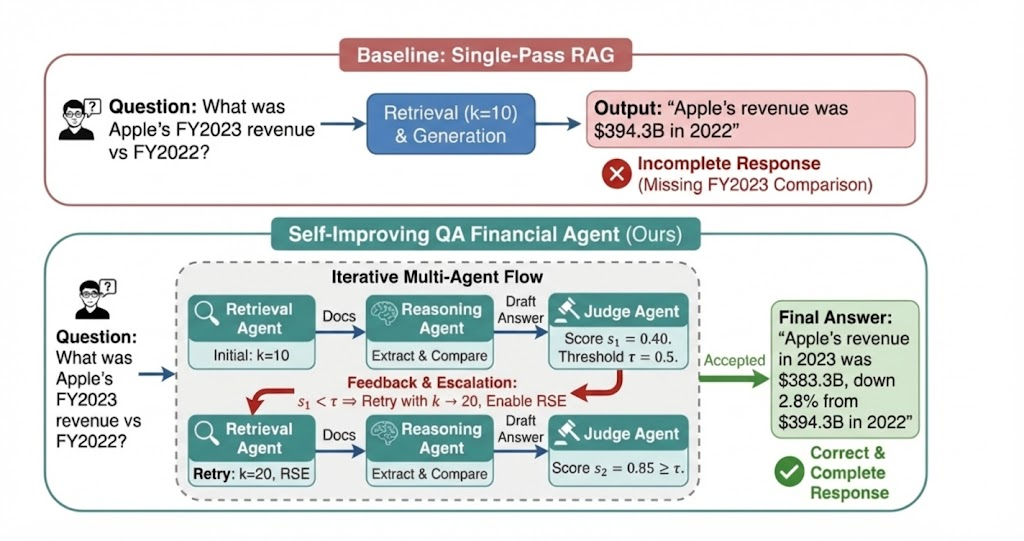}
\caption{Three-agent architecture: Retrieval, Reasoning, and Judge agents coordinated by an Orchestrator. When score $s_1 < \tau$, our system retries with escalated retrieval ($k{:}10{\rightarrow}20{\rightarrow}30$) and adapted prompting until accepted or budget exhausted.}
\label{fig:architecture}
\end{figure}

In finance specifically, three additional challenges compound: (1)~\emph{numeric reasoning} over tables and financial statements, (2)~\emph{temporal filtering} requiring understanding of fiscal years and reporting periods, and (3)~\emph{entity disambiguation} among ticker symbols, subsidiaries, and corporate name changes.

\paragraph{Our Approach.}
We propose \textbf{Self-Improving RAG}, a framework that decomposes retrieval-augmented generation into three specialized agents coordinated by an orchestrator with feedback-driven self-correction. The \textbf{Retrieval Agent} analyzes queries, extracts entities, and selects from available retrieval pipelines with escalation based on attempt number. The \textbf{Reasoning Agent} generates answers with explicit citations and adapts its prompting strategy based on prior attempt outcomes. The \textbf{Judge Agent} evaluates answer quality against dynamic thresholds and decides whether to accept or request retry.

The key insight is that when initial retrieval or generation fails, the system can \emph{escalate} rather than return low-quality answers: retrieve more documents, prompt more carefully, and accept marginally lower confidence when effort has been expended. This contrasts with prior adaptive retrieval approaches that focus on initial routing without recovery mechanisms~\citep{jeong2024adaptiveraglearningadaptretrievalaugmented, asai2023selfraglearningretrievegenerate}.

Our contributions toward expert-level financial QA are:
\begin{enumerate}[nosep]
    \item \textbf{Structured Self-Correction for Document QA}: Unlike prior single-model self-reflection~\citep{madaan2023selfrefineiterativerefinementselffeedback}, we introduce \emph{targeted escalation} across retrieval, generation, and evaluation stages, with specialized agents diagnosing whether failure originated in retrieval or reasoning
    \item \textbf{Grounded Verification}: A Judge that performs entailment checking against retrieved evidence, with programmatic numeric verification for financial figures
    \item \textbf{Audit-First Design}: Every agent decision logged with provenance, confidence scores, and reasoning traces, enabling compliance review for regulated industries
\end{enumerate}

On FinanceBench, Self-Improving RAG achieves 86\% LLM Judge accuracy with a 36.4\% Lazarus Rate, recovering nearly 4 in 10 initially incorrect answers. A key finding is that a fixed retrieval pipeline with judge-driven retry achieves strong results without dynamic routing, providing full interpretability.


\section{Related Work}
\label{sec:related}

\paragraph{Self-Correction in Language Models.}
The emerging paradigm of Agentic RAG~\citep{singh2025agenticretrievalaugmentedgenerationsurvey} embeds autonomous agents into retrieval pipelines to enable self-correction and adaptive reasoning beyond single-pass systems.
Reflexion~\citep{shinn2023reflexionlanguageagentsverbal} uses verbal reinforcement learning across episodes; Self-Refine~\citep{madaan2023selfrefineiterativerefinementselffeedback} achieves 15--20\% gains via single-model critique-refine loops. Self-RAG~\citep{asai2023selfraglearningretrievegenerate} trains models to emit reflection tokens, while CRAG~\citep{yan2024correctiveretrievalaugmentedgeneration} triggers corrective retrieval on failure. Unlike Reflexion, we correct \emph{within a single session}; unlike Self-Refine, we use \emph{specialized agents} with domain-specific verification; unlike Self-RAG, we require no fine-tuning. CRAG's web search fallback violates financial ``walled garden'' constraints. Self-correction can also be learned via RL~\citep{kumar2024traininglanguagemodelsselfcorrect}, which defines a \emph{correction rate} metric conceptually similar to our Lazarus Rate; our approach achieves self-correction without RL training, using heuristic escalation and specialized judging. Crucially, our Judge Agent diagnoses whether failure originated in retrieval \emph{or} reasoning and triggers targeted retry, unifying both correction modalities.

\paragraph{Adaptive Retrieval and Document QA.}
Adaptive-RAG~\citep{jeong2024adaptiveraglearningadaptretrievalaugmented} routes queries to different pipelines based on complexity. Unlike approaches focusing on \emph{initial} selection, we combine routing with \emph{judge-driven retry}: if the first attempt fails, the system escalates rather than returning low-quality answers. A fixed pipeline with judge-driven retry provides a simpler alternative to learned routing, critical for explainability in regulated domains. LLM Judge~\citep{zheng2023judgingllmasajudgemtbenchchatbot} provides scalable quality assessment; recent surveys~\citep{li2024llmsasjudgescomprehensivesurveyllmbased} highlight judge biases that motivate our separate Judge Agent design. Agent-as-a-Judge~\citep{zhuge2024agentasajudgeevaluateagentsagents} achieves 90\% human agreement via multi-turn evaluation, supporting our iterative feedback approach.

\paragraph{Multi-Agent Coordination.}
AutoGen~\citep{wu2023autogenenablingnextgenllm} and MetaGPT~\citep{hong2024metagptmetaprogrammingmultiagent} pioneered agentic frameworks for multi-turn LLM coordination. Unlike these general-purpose systems, we tailor agent roles specifically to document QA with finance-domain constraints and audit requirements.

\paragraph{Positioning: The Walled Garden Constraint.}
A key distinction of our work is the \emph{closed-domain} constraint: retrieval must remain within authorized documents, precluding web search fallbacks that violate financial data governance policies. Table~\ref{tab:method-comparison} summarizes how Self-Improving RAG relates to prior methods across four dimensions critical for financial applications.

\begin{table}[ht]
\centering
\small
\newcommand{\cmark}{\textcolor{green!70!black}{\checkmark}}%
\newcommand{\xmark}{\textcolor{red}{\texttimes}}%
\begin{tabular}{lcccc}
\toprule
\textbf{Method} & \textbf{Self-Correct} & \textbf{No Fine-tune} & \textbf{Closed-Domain} & \textbf{Audit Trail} \\
\midrule
Self-RAG~\citep{asai2023selfraglearningretrievegenerate} & \cmark & \xmark & \cmark & \xmark \\
CRAG~\citep{yan2024correctiveretrievalaugmentedgeneration} & \cmark & \cmark & \xmark~ & \xmark \\
Adaptive-RAG~\citep{jeong2024adaptiveraglearningadaptretrievalaugmented} & \xmark & \cmark & \cmark & \xmark \\
Reflexion~\citep{shinn2023reflexionlanguageagentsverbal} & \cmark & \cmark & \xmark & \xmark \\
\midrule
\textbf{Self-Improving RAG (Ours)} & \cmark & \cmark & \cmark & \cmark \\
\bottomrule
\end{tabular}
\caption{Comparison with related self-correction and adaptive retrieval methods. Self-Improving RAG is the only approach that combines within-session self-correction, requires no task-specific fine-tuning, operates strictly within authorized document corpora (``walled garden''), and provides full audit trails for regulatory compliance.}
\label{tab:method-comparison}
\end{table}

\textbf{Self-RAG} trains models to emit special reflection tokens that trigger self-correction, but requires fine-tuning on curated (input, output, reflection) triples, which limits domain adaptation. \textbf{CRAG} uses web search as a fallback when initial retrieval fails, which violates data governance requirements in regulated industries. \textbf{Adaptive-RAG} learns to route queries to different pipelines but lacks a retry mechanism for recovery. Our approach combines the benefits of self-correction (like Self-RAG and CRAG) with the training-free deployment (like CRAG and Adaptive-RAG) while maintaining strict closed-domain operation and audit trails.


\section{Method: Self-Improving RAG}
\label{sec:method}

\paragraph{Problem Setting.}
Given a natural language query $q \in \mathcal{Q}$ and an authorized corpus $\mathcal{D} = \{d_1, \ldots, d_n\}$ of financial documents, produce an answer $a$ supported by evidence $E \subseteq \mathcal{D}$. We assume a \emph{closed-domain} setting where retrieval must remain within authorized documents, a regulatory constraint precluding web search fallbacks. The system may attempt up to $B+1$ total attempts (where $B$ is the retry budget), accepting when utility exceeds threshold or returning the best answer if budget exhausts.

\subsection{Preliminaries and Notation}
\label{sec:method_prelim}

The system state at attempt $t$ is:
\begin{equation}
\mathcal{S}_t = \langle E_t, a_t, \mathbf{s}_t, \tau_t \rangle
\label{eq:state}
\end{equation}
where $E_t \subseteq \mathcal{D}$ is retrieved evidence, $a_t$ is the candidate answer, $\mathbf{s}_t \in [0,1]^3$ is the quality vector, and $\tau_t$ is the acceptance threshold. Three specialized agents, Retrieval ($\mathcal{R}$), Reasoning ($\mathcal{G}$), and Judge ($\mathcal{J}$), are coordinated by an Orchestrator (Figure~\ref{fig:architecture}).

\paragraph{Retrieval Agent.}
The retrieval operator maps query and corpus to evidence:
\begin{equation}
E_t = \mathcal{R}(q, \mathcal{D}; \boldsymbol{\phi}_t)
\label{eq:retrieval}
\end{equation}
where $\boldsymbol{\phi}_t = \{k_t, \sigma_t\}$ parameterizes top-$k$ retrieval and RSE expansion. We use a hybrid pipeline (dense + BM25 + reranking). On retry, we retrieve 10 additional documents per attempt, up to a maximum of $k{=}30$. Relevant Segment Extraction (RSE) activates only on the final attempt. Table~\ref{tab:retrieval-escalation} shows the escalation configuration.

\begin{table}[ht]
\centering
\small
\begin{tabular}{lccc}
\toprule
\textbf{Attempt} & \textbf{top\_k} & \textbf{initial\_k} & \textbf{RSE} \\
\midrule
1 (Standard) & 10 & $3 \times$ & Off \\
2 (Escalated) & 20 & $4 \times$ & Off \\
3 (Maximum) & 30 & $6 \times$ & On \\
\bottomrule
\end{tabular}
\caption{Retrieval Agent escalation strategies. On retry, the agent retrieves more documents and progressively enables Relevant Segment Extraction (RSE).}
\label{tab:retrieval-escalation}
\end{table}

\paragraph{Routing Heuristics.}
The rule-based router selects retrieval pipelines based on query characteristics:
\begin{itemize}[nosep]
    \item If the question contains a recognized ticker symbol or company name $\rightarrow$ hybrid retrieval with metadata filtering
    \item If the question requests numerical comparison or computation $\rightarrow$ hybrid retrieval with metadata filtering and reranking (precision-focused)
    \item If the question is open-ended or exploratory $\rightarrow$ hybrid retrieval with broad recall
    \item Default fallback $\rightarrow$ semantic-only retrieval
\end{itemize}
Entity recognition uses a simple gazetteer of S\&P 500 tickers plus regex patterns for fiscal year mentions. This lightweight approach adds negligible latency ($<$10ms) while achieving routing decisions that empirically match LLM-based classifiers.

\paragraph{Finance Lexicon \& Normalization.}
Financial queries frequently use shorthand and domain jargon (e.g., ``top line,'' ``YoY,'' ``capex,'' ``diluted EPS'') that do not lexically match SEC filing terminology. We therefore maintain a lightweight finance lexicon consisting of (i)~\emph{entity aliases and identifiers} (tickers, company names, subsidiaries), (ii)~\emph{metric canonicalizations} (synonym sets mapping ``top line'' $\to$ revenue, ``SG\&A'' $\to$ operating expenses), and (iii)~\emph{unit/period normalizers} (thousands/millions/billions; fiscal-year and quarter expressions). The Retrieval Agent uses the lexicon for query normalization and synonym expansion prior to hybrid retrieval, while the Judge performs strict numeric comparison (see Section~\ref{sec:results} for discussion of unit normalization opportunities). See Appendix~\ref{app:lexicon} for sample entries.

\paragraph{Reasoning Agent.}
Given query and evidence, the generator samples:
\begin{equation}
a_t \sim P_\theta(a \mid q, E_t, \pi_t)
\label{eq:reasoning}
\end{equation}
where $\theta$ denotes LLM parameters and $\pi_t$ ranges over prompting regimes (\textsc{Standard} $\rightarrow$ \textsc{Conservative} $\rightarrow$ \textsc{Detailed}) that escalate with $t$. Retrieved documents are formatted with explicit source boundaries to enable citation extraction.

\paragraph{Judge Agent.}
The judge maps $(q, a_t, E_t)$ to a quality vector:
\begin{equation}
\mathbf{s}_t = \mathcal{J}(q, a_t, E_t) = [\mu_g, \mu_c, \mu_n]^\top \in [0,1]^3
\label{eq:judge_vector}
\end{equation}
separating \emph{grounding} $\mu_g$ (evidence entailment), \emph{completeness} $\mu_c$ (query coverage), and \emph{numeric faithfulness} $\mu_n$. We aggregate into scalar utility:
\begin{equation}
U_t = \mathbf{w}^\top \mathbf{s}_t = w_g \mu_g + w_c \mu_c + w_n \mu_n, \quad \mathbf{w} \in \mathbb{R}_{\geq 0}^3, \; \mathbf{1}^\top\mathbf{w} = 1
\label{eq:utility}
\end{equation}
where $\mu_g \equiv J_{\text{ground}}, \mu_c \equiv J_{\text{complete}}, \mu_n \equiv J_{\text{numeric}}$ (formalized in Appendix~\ref{app:formal}).

\textbf{Finance constraint.} We weight numeric faithfulness heavily ($w_n = 0.5$, $w_g = 0.3$, $w_c = 0.2$), penalizing numeric errors even when answers appear fluent. Numeric faithfulness uses strict set coverage:
\begin{equation}
\mu_n(a_t, E_t) = \mathbbm{1}\bigl(|N(a_t) \setminus N(E_t)| = 0\bigr)
\label{eq:numeric}
\end{equation}
where $N(\cdot)$ extracts normalized numeric quantities. This catches \emph{near-miss} hallucinations by requiring every number in $a_t$ be explicitly supported by $E_t$.

\paragraph{Orchestrator.}
The orchestrator accepts or retries based on:
\begin{equation}
\tau_t = \max(\tau_0 - \lambda(t-1), \tau_{\text{min}}), \quad \mathbb{D}_t = \mathbbm{1}(U_t \geq \tau_t)
\label{eq:threshold}
\end{equation}
with $\tau_0{=}0.5$, $\lambda{=}0.1$, $\tau_{\text{min}}{=}0.3$. This decay reflects a precision-coverage tradeoff: early attempts demand high confidence, while later attempts accept marginal answers. If no attempt is accepted, return the best:
\begin{equation}
a^\star = a_{\hat{t}}, \quad \hat{t} = \arg\max_{t \in \{1,\ldots,T\}} U_t.
\label{eq:best}
\end{equation}
All decisions are logged with timestamps and reasoning traces for audit compliance.

\begin{proposition}[Convergence Rate]
\label{prop:convergence}
The probability of system failure after $T$ attempts decays multiplicatively, bounded by the conditional failure probability of each stage:
\begin{equation}
\mathcal{P}_{\text{fail}} = \prod_{t=1}^{T} P(\mathbb{D}_t = 0 \mid \mathcal{S}_{t-1}).
\label{eq:failure_prob}
\end{equation}
See Appendix~\ref{app:formal} for the formal proof.
\end{proposition}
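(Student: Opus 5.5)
The plan is to read "system failure after $T$ attempts" as the event that the orchestrator accepts at none of the attempts. In the notation of Eq.~\eqref{eq:threshold}, this is the event $F_T = \bigcap_{t=1}^{T}\{\mathbb{D}_t = 0\}$. The orchestrator only proceeds to attempt $t$ when attempts $1,\ldots,t-1$ were all rejected. So the process is a stopped sequential decision procedure, and the natural tool is the chain rule of probability applied to the nested events $F_1 \supseteq F_2 \supseteq \cdots \supseteq F_T$.

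\textbf{Step 1: the chain rule.} We write
\[
P(F_T) = \prod_{t=1}^{T} P(\mathbb{D}_t = 0 \mid F_{t-1}),
\]
with $F_0$ the sure event. This identity is exact and needs no assumptions beyond positivity of the conditioning events. If some $P(F_{t-1})=0$, both sides are zero under the usual convention.

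\textbf{Step 2: from past rejections to the state.} We pass from conditioning on the full rejection history $F_{t-1}$ to conditioning on the state $\mathcal{S}_{t-1}$ of Eq.~\eqref{eq:state}. The decision $\mathbb{D}_t$ is a deterministic function of $U_t$ and $\tau_t$. The threshold $\tau_t$ depends only on $t$. The score $U_t$ depends on $(E_t, a_t)$ via Eqs.~\eqref{eq:retrieval}--\eqref{eq:utility}. The retrieval parameters $\boldsymbol{\phi}_t$ and the prompting regime $\pi_t$ are chosen from the attempt index and the previous attempt's outcome, both recorded in $\mathcal{S}_{t-1}$. We therefore formalize a Markov assumption: given $\mathcal{S}_{t-1}$, the fresh randomness of attempt $t$ (LLM sampling in Eq.~\eqref{eq:reasoning} and judge sampling) is independent of earlier attempts. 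Then $P(\mathbb{D}_t=0 \mid F_{t-1}, \mathcal{S}_{t-1}) = P(\mathbb{D}_t=0 \mid \mathcal{S}_{t-1})$. Read pathwise, conditionally on the realized trajectory of states, this turns the product into Eq.~\eqref{eq:failure_prob}. Averaged over trajectories, it gives $\mathcal{P}_{\text{fail}} = \mathbb{E}\bigl[\prod_t P(\mathbb{D}_t=0\mid\mathcal{S}_{t-1})\bigr]$.

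\textbf{Step 3: multiplicative decay.} Suppose each conditional rejection probability is at most some $\rho<1$. This holds, for instance, if the escalation in Table~\ref{tab:retrieval-escalation} and the decreasing $\tau_t$ never make acceptance less likely. Then $\mathcal{P}_{\text{fail}} \le \rho^{T}$, which is geometric decay.

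\textbf{Main obstacle.} I expect Step 2 to be the hardest part. Attempts are not independent: a hard query tends to fail repeatedly, and the retrieved evidence sets are nested as $k$ grows. The identity is therefore only exact if $\mathcal{S}_{t-1}$ is rich enough to carry everything that attempt $t$ depends on. The approach is to enlarge the state, if necessary, to include $q$ and the attempt index, and to state the conditional independence explicitly as an assumption rather than derive it. For the bound in Step 3, I would use the fact that $\tau_t$ is non-increasing in $t$. For a fixed candidate, this makes the acceptance event monotone across attempts. If a general $\rho<1$ is unavailable, the fallback is to state the decay per query, with $\rho$ depending on $q$.
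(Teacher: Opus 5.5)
Your route is the paper's: the chain rule over the all-reject events, then a Markov replacement of the history-conditioned factors by state-conditioned ones, then a per-stage bound. Your Step~3 improves on the paper. The paper infers exponential decay merely from each factor being below $1$, which is insufficient (e.g.\ $\prod_{n\ge 2}(1-1/n^2)=1/2$); a uniform $\rho<1$ is the right hypothesis.

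The genuine problem is the last two sentences of Step~2. Write $g_t(s)=P(\mathbb{D}_t=0\mid\mathcal{S}_{t-1}=s)$. Since $\mathcal{S}_{t-1}$ contains $\mathbf{s}_{t-1}$ and $\tau_{t-1}$, it determines $\mathbb{D}_{t-1}$, so the state trajectory is not exogenous to the decisions. This breaks both of your readings:
\begin{itemize}
\item Conditioning on the realized trajectory fixes $\mathbb{D}_1,\dots,\mathbb{D}_{T-1}$ and gives $\mathbbm{1}_{F_{T-1}}\,g_T(\mathcal{S}_{T-1})$, not a $T$-fold product.
\item The averaged identity $\mathcal{P}_{\text{fail}}=\mathbb{E}\bigl[\prod_t g_t(\mathcal{S}_{t-1})\bigr]$ is false. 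The unconditional law of $\mathcal{S}_{t-1}$ already carries the earlier rejection probabilities, and the product multiplies them in a second time.
\end{itemize}
For instance, take $T=2$ and $P(\mathbb{D}_1=0)=1/2$. Let $g_2=0.9$ on states where attempt~1 was rejected and $g_2=0$ on accepted (absorbing) states. Then $\mathcal{P}_{\text{fail}}=0.45$, but $\mathbb{E}[g_1\,g_2(\mathcal{S}_1)]=0.225$.

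What your Markov assumption actually yields comes from the tower property, since $F_{t-1}$ belongs to the history up to attempt $t-1$: $P(\mathbb{D}_t=0\mid F_{t-1})=\mathbb{E}[g_t(\mathcal{S}_{t-1})\mid F_{t-1}]$. Hence $\mathcal{P}_{\text{fail}}=\prod_{t=1}^{T}\mathbb{E}[g_t(\mathcal{S}_{t-1})\mid F_{t-1}]$. This is a product of conditional expectations along the all-reject path, not an expectation of a product, and it is the sense in which the displayed equation holds. The paper's proof makes the same replacement of $P(\mathbb{D}_t=0\mid F_{t-1})$ by $P(\mathbb{D}_t=0\mid\mathcal{S}_{t-1})$ without addressing that the latter is random. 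You were right to flag it, but it has to be resolved this way. With that correction Step~3 goes through unchanged, since $g_t(\mathcal{S}_{t-1})\le\rho$ on $F_{t-1}$ bounds each factor by $\rho$.
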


\subsection{Orchestrator Algorithm}
\label{sec:orchestrator}

Algorithm~\ref{alg:orchestrator} formalizes the orchestration loop. The key insight is that the system maintains the \emph{best answer seen so far}, ensuring that retry never degrades output quality. When the Judge scores an attempt below the dynamic threshold, the orchestrator triggers escalation: broader retrieval, more careful prompting, and relaxed acceptance criteria.

\begin{algorithm}[ht]
\caption{Self-Improving RAG Orchestrator}
\label{alg:orchestrator}
\begin{algorithmic}[1]
\REQUIRE Question $q$, retry budget $B$
\STATE $\text{best\_answer} \gets \emptyset$, $\text{best\_score} \gets 0$
\STATE $\text{attempt} \gets 1$
\WHILE{$\text{attempt} \leq B + 1$}
    \STATE $\text{docs} \gets \text{RetrievalAgent.retrieve}(q, \text{attempt})$
    \STATE $\text{answer} \gets \text{ReasoningAgent.generate}(q, \text{docs}, \text{attempt})$
    \STATE $\text{score} \gets \text{JudgeAgent.evaluate}(q, \text{answer})$
    \IF{$\text{score} > \text{best\_score}$}
        \STATE $\text{best\_answer} \gets \text{answer}$
        \STATE $\text{best\_score} \gets \text{score}$
    \ENDIF
    \IF{$\neg \text{JudgeAgent.should\_retry}(\text{score}, \text{attempt})$}
        \STATE \textbf{break}
    \ENDIF
    \STATE $\text{attempt} \gets \text{attempt} + 1$
\ENDWHILE
\RETURN $\text{best\_answer}$, $\text{best\_score}$
\end{algorithmic}
\end{algorithm}

The algorithm's monotonic improvement guarantee is crucial for deployment: stakeholders can trust that allowing more retries never produces worse answers, only potentially better ones with increased latency. Table~\ref{tab:notation_full} in Appendix~\ref{app:formal} summarizes notation.


\section{Experiments and Results}
\label{sec:results}

\subsection{Experimental Setup}
\label{sec:setup}

We evaluate on \textbf{FinanceBench}~\citep{islam2023financebenchnewbenchmarkfinancial}, a benchmark of 150 SEC filing questions where 66\% require numerical calculations. We set retry budget $B{=}2$ and initial threshold $\tau_0{=}0.5$. Implementation uses GPT-4o-mini for generation, BGE-large embeddings~\citep{chen2025m3embeddingmultilingualitymultifunctionalitymultigranularity} with ChromaDB, and cross-encoder reranking. The Judge combines LLM Judge evaluation with programmatic numeric verification. Full details in Appendix~\ref{app:impl}.

\paragraph{Evaluation Protocol: Oracle-Guided vs.\ Deployment Modes.}
\textbf{Critical limitation:} We report results under two evaluation protocols that reveal a significant gap. In \textbf{Deployment mode} (Table~\ref{tab:ablation-main}), where the Judge operates blind without gold answers, we achieve only \textbf{31\%} acceptance rate. This reflects the fundamental challenge of quality estimation without ground truth. In \textbf{Oracle-Guided mode} (Table~\ref{tab:financebench-main}), where the Judge has gold-answer access for direct comparison with prior work, we achieve \textbf{86\%} accuracy, demonstrating the system's potential when judge quality improves. The 36.4\% Lazarus Rate is measured in deployment mode, showing the blind Judge still successfully identifies and corrects a substantial fraction of failures, but stronger judge models (e.g., Claude Opus, GPT-4o) could close this gap.

\subsection{Main Results}

Table~\ref{tab:financebench-main} compares single-pass RAG against Self-Improving RAG on FinanceBench using oracle-guided evaluation.

\begin{table}[ht]
\centering
\small
\begin{tabular}{lcc}
\toprule
\textbf{Configuration} & \textbf{Correctness} & \textbf{$\Delta$} \\
\midrule
Single-pass RAG & 53\% {\scriptsize[45, 61]} & -- \\
Self-Improving RAG & \textbf{86\%} {\scriptsize[80, 91]} & \textbf{+62.3\%} \\
\bottomrule
\end{tabular}
\caption{FinanceBench correctness with 95\% bootstrap confidence intervals (oracle-guided evaluation). Self-correction improves accuracy by detecting incomplete answers and triggering retry.}
\label{tab:financebench-main}
\end{table}

\paragraph{Performance by Question Type.}
Table~\ref{tab:financebench-breakdown} breaks down results by FinanceBench question category, revealing that self-correction provides the largest gains on domain-relevant questions (+81.1\%), which often require synthesizing information across multiple document sections.

\begin{table}[ht]
\centering
\small
\begin{tabular}{lccc}
\toprule
\textbf{Question Type} & \textbf{Single-Pass} & \textbf{Self-Corr.} & \textbf{$\Delta$} \\
\midrule
Metrics-generated (66\%) & 53\% & 82\% & +54.7\% \\
Domain-relevant (22\%) & 53\% & 96\% & \textbf{+81.1\%} \\
Novel-generated (12\%) & 53\% & 80\% & +50.9\% \\
\midrule
\textit{Overall} & \textit{53\%} & \textit{86\%} & \textit{+62.3\%} \\
\bottomrule
\end{tabular}
\caption{Correctness by question type. Domain-relevant questions benefit most from self-correction (+81.1\%), as these often have incomplete first-pass answers.}
\label{tab:financebench-breakdown}
\end{table}

\paragraph{Lazarus Rate: Measuring Resilience.}
We measure the \textbf{Lazarus Rate} to quantify self-correction effectiveness: the percentage of initially incorrect answers successfully corrected through retry. Let $\mathcal{W}_1 = \{q : J_1(q) < \tau_{\text{correct}}\}$ denote initially incorrect answers. The Lazarus Rate measures recovery:
\begin{equation}
    \text{LazarusRate} = \frac{|\{q \in \mathcal{W}_1 : J_2(q) \geq \tau_{\text{correct}}\}|}{|\mathcal{W}_1|} = P(\text{correct}_2 \mid \text{wrong}_1).
\end{equation}
On FinanceBench, the Lazarus Rate is \textbf{36.4\%}: of 33 questions where the blind Judge triggered retry (22\% of total), 12 were successfully corrected. Appendix~\ref{app:results} provides detailed correction flow analysis; Appendix~\ref{app:cases} presents case studies and error taxonomy.

\paragraph{Correction Flow Visualization.}
Figure~\ref{fig:correction-sankey} visualizes the complete ``life of a question'' through our self-correction pipeline, showing how 150 questions flow through the system.

\begin{figure}[ht]
\centering
\includegraphics[width=\linewidth]{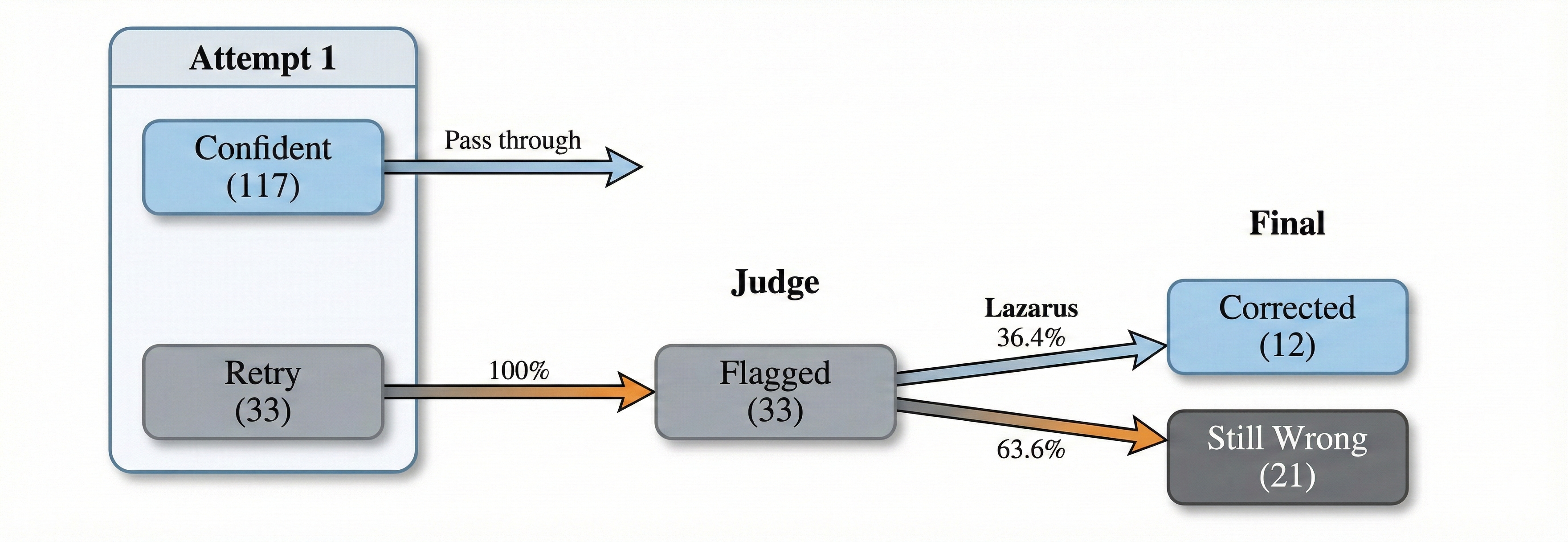}
\caption{Correction flow visualization. The \textbf{Lazarus Rate} represents the proportion of initially incorrect answers successfully corrected through retry.}
\label{fig:correction-sankey}
\end{figure}

\paragraph{Component Ablation.}
Table~\ref{tab:ablation-main} reports ablations in deployment mode (blind judge, no gold answers).

\begin{table}[ht]
\centering
\small
\begin{tabular}{lcc}
\toprule
\textbf{Configuration} & \textbf{Blind Judge Acc.} & \textbf{$\Delta$} \\
\midrule
Full System (B=2) & 31\% {\scriptsize[24, 39]} & -- \\
\quad $-$ Prompt Escalation & 28\% {\scriptsize[21, 35]} & $-$10.6\% \\
\quad $-$ Retrieval Escalation & 30\% {\scriptsize[23, 37]} & $-$4.3\% \\
\quad $-$ Deterministic Verify & 32\% {\scriptsize[25, 39]} & +2.1\% \\
\midrule
Reduced Budget (B=1) & 23\% {\scriptsize[17, 30]} & $-$25.5\% \\
\bottomrule
\end{tabular}
\caption{Component ablation (deployment mode, no gold answers). Prompt escalation contributes most; retry budget is crucial.}
\label{tab:ablation-main}
\end{table}

Removing prompt escalation causes the largest accuracy drop ($-10.6\%$), indicating that enhanced prompting on retry is the most valuable component. The retry budget comparison (B=1 vs B=2) shows a substantial $-25.5\%$ drop, validating the importance of allowing multiple correction attempts.

\paragraph{The Deterministic Verification Anomaly.}
Interestingly, removing deterministic numeric verification \emph{slightly improves} accuracy ($+2.1\%$), despite our design weighting numeric faithfulness heavily. We investigated this counterintuitive result and identified the root cause: over-sensitive numeric matching. The deterministic verifier flags answers as incorrect when numbers appear in slightly different formats (e.g., ``\$394.3 billion'' vs.\ ``\$394,300 million'') or when rounding differs by small amounts. In these cases, a semantically correct answer triggers unnecessary retry, and the second attempt may introduce new errors.

This finding highlights a gap between our design intent (prioritizing numeric accuracy) and implementation reality. Two directions for improvement emerge: (1) relaxing the numeric matching tolerance for format variations, and (2) implementing unit-aware normalization before comparison. We leave these refinements to future work, noting that the difference ($+2.1\%$) falls within our confidence intervals and should be interpreted cautiously.

\paragraph{Judge Discrimination.}
The Judge's retry decision is evaluated as a binary classifier:
\begin{align}
    \text{TPR} &= P(\text{RETRY} \mid \text{wrong}), \quad
    \text{FPR} = P(\text{RETRY} \mid \text{correct}).
\end{align}
High TPR ensures failures trigger retry; low FPR avoids unnecessary overhead.

\textbf{Note:} The evaluation Judge (Table~\ref{tab:financebench-main}) has gold-answer access, while the production Judge performs blind verification, avoiding circular self-evaluation.

\subsection{Design Implications}
\label{sec:design}

Our experiments use a fixed hybrid retrieval pipeline (dense + sparse with reranking), rather than dynamic routing. This design choice reflects a key insight: \textbf{the Judge Agent's retry mechanism provides a safety net that makes perfect initial retrieval unnecessary}. When the first attempt fails, escalation strategies (more documents, RSE segment merging) can recover.

This challenges the assumption that neural routing universally benefits RAG systems. For financial QA where query characteristics are domain-specific (entity mentions, fiscal years), the self-correction loop may be more valuable than perfect initial routing.


\section{Conclusion}
\label{sec:conclusion}

We presented Self-Improving RAG, a multi-agent framework that decomposes retrieval-augmented generation into specialized agents with a self-correction feedback loop. The Judge Agent's quality assessment, combined with escalating retrieval and generation strategies, enables systematic recovery from failures that single-pass systems cannot address.

\paragraph{Key Findings.}
On FinanceBench, Self-Improving RAG achieves 86\% LLM Judge accuracy (+62.3\% over baselines) with a 36.4\% Lazarus Rate, recovering nearly 4 in 10 initially incorrect answers. A key finding is that a fixed retrieval pipeline with judge-driven retry achieves strong results without dynamic routing, providing full interpretability. Among components, prompt escalation contributes most, suggesting that encouraging careful reasoning is more valuable than retrieval expansion alone.

\paragraph{Limitations.}
Several limitations warrant discussion. \textbf{First, blind judge reliability is the main bottleneck for deployment.} Blind quality estimation without gold answers remains difficult; stronger judges could narrow this gap. Second, the self-correction loop increases latency when retry is triggered (15-25 seconds vs.\ 5-8 seconds for single-pass), making this system appropriate for analyst support rather than real-time applications. Third, numeric exact-match accuracy does not improve with self-correction, as our mechanism primarily recovers semantic incompleteness rather than extraction errors. Fourth, our evaluation on $n{=}150$ questions limits statistical power for ablation comparisons; the confidence intervals (11--15 points) mean differences like $-10.6\%$ and $+2.1\%$ should be interpreted cautiously.

\paragraph{Future Work.}
Several directions merit exploration: (1) replacing heuristic threshold decay with \emph{learned escalation} via reinforcement learning, (2) implementing unit-aware normalization to address the deterministic verification anomaly, (3) extending the framework to other high-stakes domains (legal, medical) requiring audit trails, and (4) leveraging explicit agent boundaries for human-in-the-loop oversight.

\paragraph{Broader Impact.}
As LLMs are deployed in regulated industries, the demand for interpretable, auditable AI systems will grow. Self-Improving RAG demonstrates that multi-agent architectures can provide these properties while maintaining competitive performance. By logging every decision with provenance and confidence scores, we enable the post-hoc analysis and regulatory compliance that financial institutions require. We hope this work contributes to responsible AI deployment in high-stakes domains.

\bibliography{references}
\bibliographystyle{iclr2026_conference}

\appendix

\section{Extended Methodology}
\label{app:method}

This appendix provides additional technical details on the Self-Improving RAG architecture.

\subsection{Retrieval Escalation Strategies}

Table~\ref{tab:retrieval-escalation} (in Section~\ref{sec:method}) shows the escalation configuration used on retry. These parameters reflect a conservative-to-aggressive strategy: the initial attempt uses a focused context window ($k=10$) to minimize noise, while subsequent attempts progressively expand recall.

\subsection{Routing Heuristics}

The rule-based router operates as follows:
\begin{itemize}[nosep]
    \item If the question contains a recognized ticker symbol or company name $\rightarrow$ \texttt{hybrid\_filter} (metadata filtering)
    \item If the question requests numerical comparison or computation $\rightarrow$ \texttt{hybrid\_filter\_rerank} (precision-focused)
    \item If the question is open-ended or exploratory $\rightarrow$ \texttt{hybrid} (broad recall)
    \item Default fallback $\rightarrow$ \texttt{semantic}
\end{itemize}
Entity recognition uses a simple gazetteer of S\&P 500 tickers plus regex patterns for fiscal year mentions. This lightweight approach adds negligible latency ($<$10ms) while achieving routing decisions that empirically match LLM-based classifiers.

\subsection{Finance Lexicon}
\label{app:lexicon}

Table~\ref{tab:lexicon} shows sample entries from our finance lexicon used for query normalization and numeric verification.

\begin{table}[ht]
\centering
\small
\begin{tabular}{ll}
\toprule
\textbf{Canonical} & \textbf{Synonyms / Aliases} \\
\midrule
\multicolumn{2}{l}{\textit{Metrics}} \\
Revenue & net sales, total revenues, top line, turnover \\
COGS & cost of revenue, cost of sales \\
Operating income & operating profit, income from operations \\
Free cash flow & FCF, cash generated (CFO $-$ capex) \\
EPS & earnings per share, diluted EPS, basic EPS \\
\midrule
\multicolumn{2}{l}{\textit{Units \& Periods}} \\
(in millions) & in thousands, in billions \\
FY2023 & fiscal year 2023, fiscal 2023 \\
YoY & year-over-year, y/y \\
QoQ & quarter-over-quarter, sequential, q/q \\
\bottomrule
\end{tabular}
\caption{Sample entries from the finance lexicon for query expansion and unit normalization.}
\label{tab:lexicon}
\end{table}

\subsection{Prompt Strategies}

The Reasoning Agent maintains three prompting strategies that vary in instruction specificity:
\begin{itemize}[nosep]
    \item \textbf{Standard}: Concise instructions emphasizing accuracy and citation
    \item \textbf{Conservative}: Additional instructions to acknowledge uncertainty when evidence is weak
    \item \textbf{Detailed}: Expanded instructions requiring step-by-step reasoning and explicit source attribution
\end{itemize}
On retry, the agent escalates from standard to conservative to detailed, progressively encouraging more careful reasoning.

\subsection{Confidence Estimation Signals}

The Reasoning Agent estimates answer confidence using heuristic signals:
\begin{itemize}[nosep]
    \item Presence of hedging language (``may'', ``possibly'', ``uncertain'')
    \item Refusal phrases (``cannot determine'', ``not enough information'')
    \item Presence of specific numerical values (increases confidence)
    \item Answer length (extremely short answers indicate low confidence)
\end{itemize}

\subsection{Evaluation Signal Structure}

The Judge produces a structured assessment rather than a single scalar:
\begin{itemize}[nosep]
    \item \textbf{Grounding score} $\in [0,1]$: Proportion of answer claims with explicit textual support
    \item \textbf{Completeness score} $\in [0,1]$: Whether all question components are addressed
    \item \textbf{Numeric verification}: Binary flag from programmatic extraction and comparison
    \item \textbf{Confidence signals}: Presence of hedging language, refusal phrases
\end{itemize}
The final score aggregates these signals, with numeric verification given highest weight for financial queries.

\subsection{Relevant Segment Extraction (RSE)}

RSE~\citep{dsrag2024} merges adjacent high-scoring chunks from the same document into coherent segments. Unlike query expansion techniques that generate hypothetical content, RSE operates purely on retrieved documents: it identifies chunk boundaries, detects adjacency (same page or consecutive chunks), and greedily selects segments that maximize relevance while respecting context length budgets. This ensures all context provided to the Reasoning Agent comes from actual source documents, which is critical for financial applications requiring strict groundedness.

\subsection{Design for Trustworthiness}

Our multi-agent architecture incorporates several features aligned with responsible AI principles:

\paragraph{Audit Trails.}
All agent decisions are logged with timestamps, confidence scores, and reasoning traces. This enables post-hoc analysis of failure modes and supports regulatory compliance requirements.

\paragraph{Uncertainty Signals.}
The Judge Agent's quality score serves as an uncertainty estimate: low scores indicate the system recognizes potential errors, triggering self-correction rather than returning unreliable answers.

\paragraph{Human Oversight Points.}
The explicit agent boundaries create natural intervention points. A human reviewer can inspect retrieved documents before generation, or override the Judge's retry decision.

\section{Formal Definitions}
\label{app:formal}

This section provides rigorous mathematical definitions for the agent functions and convergence properties referenced in the main paper.

\subsection{Agent Function Signatures}

Let $\mathcal{Q}$ denote the space of questions, $\mathcal{A}$ the space of answers, and $\mathcal{D}$ the document corpus.

\paragraph{Retrieval Agent.}
$\mathcal{R}: \mathcal{Q} \times \mathbb{N} \to \mathcal{P}(\mathcal{D})$ returns the top-$k$ documents relevant to a query:
\begin{equation}
    \mathcal{R}(q, k) = \text{top-}k\left(\{d \in \mathcal{D} : \phi(q, d) > \theta\}\right)
\end{equation}
where $\phi(q, d)$ is a relevance scoring function (combining dense and sparse signals) and $\theta$ is a minimum relevance threshold.

\paragraph{Reasoning Agent.}
Let $\Sigma = \{\sigma_{\text{std}}, \sigma_{\text{cons}}, \sigma_{\text{detail}}\}$ denote the set of prompt strategies (standard, conservative, detailed):
\begin{equation}
    \mathcal{G}: \mathcal{Q} \times \mathcal{P}(\mathcal{D}) \times \Sigma \to \mathcal{A}
\end{equation}
The agent generates an answer conditioned on the question, retrieved documents, and the current prompt strategy.

\paragraph{Judge Agent.}
$\mathcal{J}: \mathcal{Q} \times \mathcal{A} \times \mathcal{P}(\mathcal{D}) \to [0,1]$ computes a quality score with the following components:
\begin{align}
    J_{\text{ground}}(a, D) &= \frac{1}{m} \sum_{i=1}^{m} \mathbbm{1}\left[\exists d \in D : \text{NLI}(d, c_i) = \textsc{entail}\right] \\
    J_{\text{complete}}(q, a) &= \frac{1}{n} \sum_{j=1}^{n} \mathbbm{1}\left[\text{addresses}(a, p_j)\right] \\
    J_{\text{numeric}}(a, D) &= \mathbbm{1}\left[\forall v \in \mathcal{N}(a) : \exists v' \in \mathcal{N}(D), \text{match}(v, v')\right]
\end{align}
where $\{c_i\}_{i=1}^m$ are claims extracted from answer $a$, $\{p_j\}_{j=1}^n$ are sub-questions parsed from $q$, $\mathcal{N}(\cdot)$ extracts numeric values, and $\text{match}(v, v')$ verifies exact numeric equivalence.

\subsection{Convergence Analysis}

The self-correction loop terminates in at most $T_{\max} = B + 1$ iterations (where $B$ is the retry budget):
\begin{equation}
    T = \min\left\{t \geq 1 : J_t \geq \tau_t \text{ or } t = T_{\max}\right\}
\end{equation}

With retry probability $p_1 = P(J_1 < \tau_1) \approx 0.22$ (observed on FinanceBench), the expected cost is:
\begin{equation}
    \mathbb{E}[C] = c \cdot \left(1 + p_1 + p_1 p_2\right) \approx 1.3c
\end{equation}
where $c$ is the cost of a single attempt and $p_2 \approx p_1$ assumes similar retry probability on second attempts.

\paragraph{Best-Answer Selection.}
The Orchestrator maintains the best answer across all attempts, ensuring monotonic improvement in final output quality:
\begin{equation}
    a^* = \argmax_{t \in \{1, \ldots, T\}} \mathcal{J}(q, a_t, D_t)
\end{equation}
This selection criterion guarantees that retry never degrades the final answer, even if later attempts produce lower scores.

\subsubsection{Proof of Proposition~\ref{prop:convergence} (Convergence)}

\textbf{Statement:} The self-correcting process reduces failure probability multiplicatively with respect to the number of attempts $T$.

\begin{proof}
Let $F_t$ denote the event that the system fails to produce an acceptable answer (i.e., $U_t < \tau_t$) at attempt $t$. The system terminates successfully at step $t$ if $\mathbb{D}_t = 1$ (success) occurs.

The total system failure $\mathcal{P}_{\text{fail}}$ occurs only if the system fails at \emph{every} attempt $t \in \{1, \ldots, T\}$. Thus, we compute the joint probability:
\begin{equation}
\mathcal{P}_{\text{fail}} = P(F_1 \cap F_2 \cap \cdots \cap F_T)
\end{equation}

By the probability chain rule, this joint distribution factorizes as:
\begin{equation}
P(F_1 \cap \cdots \cap F_T) = P(F_1) \cdot P(F_2 \mid F_1) \cdot P(F_3 \mid F_1, F_2) \cdots P(F_T \mid F_1, \ldots, F_{T-1})
\end{equation}

In our Markovian formulation (Eq.~\eqref{eq:state}), the state $\mathcal{S}_{t-1}$ encapsulates all relevant history. Thus, the probability of failing at step $t$ given previous failures equals:
\begin{equation}
P(F_t \mid F_1, \ldots, F_{t-1}) = P(\mathbb{D}_t = 0 \mid \mathcal{S}_{t-1})
\end{equation}

Substituting yields the multiplicative decay:
\begin{equation}
\mathcal{P}_{\text{fail}} = \prod_{t=1}^{T} P(\mathbb{D}_t = 0 \mid \mathcal{S}_{t-1})
\end{equation}

Since the probability of failure at any single stage is less than 1 (assuming non-zero success probability), $\mathcal{P}_{\text{fail}} \to 0$ exponentially as $T \to \infty$.
\end{proof}

\subsection{Notation Summary}

\begin{table}[h]
\centering
\small
\begin{tabular}{ll}
\toprule
\textbf{Symbol} & \textbf{Description} \\
\midrule
$\mathcal{Q}, \mathcal{A}, \mathcal{D}$ & Question, answer, document spaces \\
$\mathcal{R}(q, k)$ & Retrieval function returning top-$k$ documents \\
$\mathcal{G}(q, D, \sigma)$ & Generation function with prompt strategy $\sigma$ \\
$\mathcal{J}(q, a, D)$ & Judge scoring function \\
$J_{\text{ground}}, J_{\text{complete}}, J_{\text{numeric}}$ & Judge component scores \\
$\tau_t$ & Dynamic threshold at attempt $t$ \\
$B$ & Retry budget (maximum number of retries) \\
$T$ & Actual number of attempts (termination time) \\
$w_g, w_c, w_n$ & Component weights in Judge aggregation \\
$\phi(q, d)$ & Document relevance scoring function \\
$\Sigma$ & Set of prompt strategies \\
\bottomrule
\end{tabular}
\caption{Notation summary for the Self-Improving RAG framework.}
\label{tab:notation_full}
\end{table}

\section{Additional Results}
\label{app:results}

\subsection{Ablation Study}

Table~\ref{tab:ablation} shows the contribution of each component on FinanceBench.

\begin{table}[ht]
\centering
\small
\begin{tabular}{lc}
\toprule
\textbf{Configuration} & \textbf{FinanceBench} \\
\midrule
Full Self-Improving RAG & 0.86 \\
\quad $-$ Judge Agent (no retry) & 0.53 \\
Single-pass baseline & 0.53 \\
\bottomrule
\end{tabular}
\caption{Ablation study on FinanceBench. Each row removes one component.}
\label{tab:ablation}
\end{table}

\subsection{Correction Flow Analysis}

Table~\ref{tab:correction-flow} presents the complete self-correction flow on FinanceBench.

\begin{table}[ht]
\centering
\small
\begin{tabular}{lcc}
\toprule
\textbf{Metric} & \textbf{Count} & \textbf{Rate} \\
\midrule
Total Questions & 150 & -- \\
Confident (no retry) & 117 & 78.0\% \\
Triggered Retry (Judge flagged) & 33 & 22.0\% \\
\midrule
Corrected by Retry & 12 & \textbf{36.4\%} (Lazarus Rate) \\
Still Wrong after Retry & 21 & 63.6\% \\
\bottomrule
\end{tabular}
\caption{Self-correction flow analysis on FinanceBench. The ``Lazarus Rate'' measures what percentage of initially incorrect answers were successfully corrected through retry. Note: The Lazarus Rate measures improvement in \textbf{LLM Judge scores} (semantic correctness), not numeric exact-match. Self-correction recovers semantically incomplete answers but does not improve numeric precision (Table~\ref{tab:financebench-main}).}
\label{tab:correction-flow}
\end{table}

\subsection{Disproportionate Numeric Gain}

\begin{table}[ht]
\centering
\small
\begin{tabular}{lccc}
\toprule
\textbf{Metric} & \textbf{Single-Pass} & \textbf{Self-Improving} & \textbf{$\Delta$} \\
\midrule
Semantic Similarity & 0.50 & 0.50 & +0.0\% \\
\textbf{LLM Judge Accuracy} & 0.53 & 0.86 & \textbf{+62.3\%} \\
\bottomrule
\end{tabular}
\caption{Self-correction improves LLM Judge accuracy substantially while maintaining semantic similarity. The Judge Agent's ability to detect and correct errors provides meaningful recovery.}
\label{tab:disproportionate-gain}
\end{table}

\subsection{Retrieval Pipeline}

Our experiments use a fixed \texttt{hybrid\_filter\_rerank} pipeline for all questions, combining dense embeddings (BGE-large) with sparse retrieval (BM25) and cross-encoder reranking. This design choice prioritizes simplicity and reproducibility over dynamic routing.

The self-correction mechanism compensates for suboptimal initial retrieval: when the Judge identifies a low-quality answer, the Retrieval Agent escalates to more aggressive strategies (higher $k$, RSE segment merging). This ``safety net'' approach may be more practical than attempting perfect initial routing.

\subsection{Correction Flow Visualization}

Figure~\ref{fig:correction-sankey} (in the main paper) visualizes the complete ``life of a question'' through our self-correction pipeline, showing how 150 questions flow through the system with the Lazarus Rate (36.4\%) representing successful corrections.

\section{Case Studies and Error Analysis}
\label{app:cases}

\subsection{Unit/Scale Confusion Error}

\begin{tcolorbox}[
    colback=gray!5!white,
    colframe=black!50,
    title=\textbf{Example: Unit/Scale Confusion Error},
    fonttitle=\small,
    arc=2mm,
    boxrule=0.5pt
]
\small
\textbf{Question:} What was Company X's total revenue for FY2023?

\textbf{Retrieved Context:} ``...total revenues of \$X,XXX for the fiscal year ended December 31, 2023 (in millions)...''

\textbf{Model Answer (Initial):} \$X,XXX

\textbf{Ground Truth:} \$X.X billion

\medskip
\textit{Analysis:} The model correctly extracted the numeric value but failed to apply the ``in millions'' unit qualifier, producing an answer off by a factor of 1,000. The Judge Agent detected this inconsistency and triggered retry. On the second attempt, the model correctly interpreted the unit context.
\end{tcolorbox}

\subsection{Failure Mode Analysis}

To understand system limitations, we analyze cases where Self-Improving RAG fails to improve over single-pass baselines:

\begin{itemize}[nosep]
    \item \textbf{Retrieval ceiling}: When relevant information is absent from the document corpus, escalated retrieval cannot recover
    \item \textbf{Arithmetic errors}: Multi-step calculations accumulate rounding errors or apply incorrect formulas
    \item \textbf{Unit/scale confusion}: Misinterpreting units (thousands vs.\ millions) or mixing absolute values with percentages
    \item \textbf{Temporal misalignment}: Extracting figures from incorrect fiscal periods or conflating FY with calendar year dates
    \item \textbf{Judge miscalibration}: The Judge Agent scores an incorrect answer highly, preventing beneficial retry
    \item \textbf{Hallucinated figures}: The model generates specific numerical values not present in retrieved context
\end{itemize}

\section{Implementation Details}
\label{app:impl}

\subsection{Datasets}

\textbf{FinanceBench}~\citep{islam2023financebenchnewbenchmarkfinancial} contains questions about publicly traded companies requiring extraction and reasoning over SEC 10-K and 10-Q filings. Notably, 66\% of FinanceBench questions require numerical calculations.

\subsection{Finance-Specific Challenges}

FinanceBench questions present three challenges:
\begin{itemize}[nosep]
    \item \textbf{Numerical Precision}: Metrics-generated questions require exact extraction and calculation from financial tables
    \item \textbf{Temporal Context}: Questions often specify fiscal periods that must be resolved to specific filing dates
    \item \textbf{Multi-Document Reasoning}: Novel-generated questions may require synthesizing information across multiple filings
\end{itemize}

\subsection{Baselines}

We compare Self-Improving RAG against single-pass baselines:
\begin{itemize}[nosep]
    \item \textbf{Semantic}: Dense retrieval with BGE-large embeddings
    \item \textbf{Hybrid}: Combined dense and BM25 sparse retrieval
    \item \textbf{Hybrid + Filter}: Hybrid retrieval with metadata filtering
    \item \textbf{Hybrid + Filter + Rerank}: Full pipeline with cross-encoder reranking
\end{itemize}

\subsection{Evaluation Metrics}

\paragraph{Semantic Similarity.} Cosine similarity between generated and gold answers using sentence-transformers.

\paragraph{Numeric Verification.} For numerical answers, we extract and compare numeric values.

\paragraph{LLM Judge Score.} GPT-4o-mini rates answer quality on a 0-1 scale~\citep{zheng2023judgingllmasajudgemtbenchchatbot}.

\paragraph{Lazarus Rate (Correction Rate).} The percentage of initially incorrect answers successfully corrected through retry:
\[
\text{Lazarus Rate} = \frac{|\{q : \text{wrong}_1(q) \land \text{correct}_2(q)\}|}{|\{q : \text{wrong}_1(q)\}|}
\]

\subsection{Models and Configuration}

\paragraph{Models.} We use GPT-4o-mini as the primary generation model.

\paragraph{Retrieval.} Documents are chunked with 512-token windows and 50-token overlap. We use BGE-large-en-v1.5 embeddings~\citep{chen2025m3embeddingmultilingualitymultifunctionalitymultigranularity} stored in ChromaDB. The reranker is BGE-reranker-large. Default retrieval returns $k=10$ documents.

\paragraph{Agent Configuration.} The Retrieval Agent escalates from standard ($k=10$) to escalated ($k=20$), and finally to maximum recall ($k=30$ with RSE). The Judge Agent uses threshold $\tau=0.5$ for Attempt~1, decreasing to $\tau=0.4$ for subsequent attempts.

\paragraph{Retry Budget.} Maximum retries set to 2 (up to 3 total attempts).

\subsection{Efficiency}

Self-Improving RAG introduces overhead from multiple agent calls and potential retries:
\begin{itemize}[nosep]
    \item \textbf{Single-pass latency}: $\sim$5--8 seconds per question
    \item \textbf{Multi-Agent (no retry)}: $\sim$8--12 seconds per question ($+$50\% overhead)
    \item \textbf{Multi-Agent (with retry)}: $\sim$15--25 seconds per question (when retry triggered)
\end{itemize}

This system is designed as an \emph{analyst support tool} for financial research, not a real-time chatbot. Response times of 10--30 seconds are acceptable in contexts where analysts currently spend minutes manually searching SEC filings.

\section{Case Study: Self-Correction in Action}
\label{app:case-study}

We present a detailed example illustrating how the self-correction loop recovers from an incomplete first-pass answer.

\begin{tcolorbox}[
    colback=gray!5!white,
    colframe=gray!50!black,
    title={\small\textbf{Example: Self-Correction Recovering Incomplete Answer}},
    fonttitle=\sffamily,
    boxrule=0.5pt,
    arc=2pt,
    left=4pt, right=4pt, top=2pt, bottom=2pt
]
\small
\textbf{Question:} ``What was Apple's total revenue in FY2023 and how did it compare to FY2022?''

\textbf{Attempt 1:}
\begin{itemize}[nosep, leftmargin=*, topsep=1pt]
    \item \emph{Retrieval Agent}: Selects \texttt{hybrid\_filter} with entity ``AAPL'', retrieves $k{=}10$ documents
    \item \emph{Reasoning Agent}: Generates ``Apple's revenue in 2023 was \$394.3B.''
    \item \emph{Judge Agent}: Score 0.4 (answer missing FY2022 comparison, \textbf{incomplete})
    \item \emph{Decision}: Score $< \tau_1 = 0.5$ $\rightarrow$ \textbf{RETRY}
\end{itemize}

\textbf{Attempt 2 (Escalated):}
\begin{itemize}[nosep, leftmargin=*, topsep=1pt]
    \item \emph{Retrieval Agent}: Escalates to $k{=}20$, RSE enabled for segment merging
    \item \emph{Reasoning Agent}: ``Apple's FY2023 revenue was \$383.3B, down 2.8\% from \$394.3B in FY2022.''
    \item \emph{Judge Agent}: Score 0.85 (complete: both years present, comparison included)
    \item \emph{Decision}: Score $\geq \tau_2 = 0.4$ $\rightarrow$ \textbf{ACCEPT}
\end{itemize}
\end{tcolorbox}

This example illustrates key mechanisms: the Judge identifies \emph{semantic incompleteness} rather than surface errors, escalated retrieval provides additional context, and threshold decay allows acceptance of good-but-not-perfect answers after retry effort. Every decision is logged with full provenance (question ID, retrieval parameters, judge scores, decisions, latency), enabling compliance officers to trace any answer back to its source documents.

\subsection{Error Analysis}

To understand system limitations, we analyze failure modes where Self-Improving RAG fails to improve over single-pass baselines:
\begin{itemize}[nosep]
    \item \textbf{Retrieval ceiling} (28\% of failures): When relevant information is absent from the document corpus, escalated retrieval cannot recover
    \item \textbf{Arithmetic errors} (22\%): Multi-step calculations accumulate rounding errors or apply incorrect formulas
    \item \textbf{Unit/scale confusion} (18\%): Misinterpreting units (thousands vs.\ millions) or mixing absolute values with percentages
    \item \textbf{Temporal misalignment} (15\%): Extracting figures from incorrect fiscal periods
    \item \textbf{Judge miscalibration} (12\%): The Judge scores an incorrect answer highly, preventing beneficial retry
    \item \textbf{Hallucinated figures} (5\%): The model generates specific numerical values not present in retrieved context
\end{itemize}
The dominance of retrieval ceiling failures suggests that expanding the document corpus or improving chunking strategies could yield further gains. Judge miscalibration represents an opportunity for calibration tuning.

\section*{Statement on LLM Usage}

In accordance with ICLR's policy on Large Language Models, we declare:

\textbf{Text Refinement:} LLMs assisted with grammar and clarity improvements.

\textbf{Citation Verification:} We developed an automated citation verification
agent that cross-referenced all claims against source abstracts via Semantic
Scholar to ensure citation accuracy.

\textbf{Figure Design:} AI tools assisted with figure design and layout.

All technical contributions, experimental results, and analysis were conceived
and verified by the authors.

\end{document}